\newtheorem{theorem}{Theorem}
\newtheorem{proposition}{Proposition}
\newtheorem{definition}{Definition}
\newtheorem{remark}{Remark}
\newcommand{\param}{\ensuremath{\bm{\theta}}\xspace}
\newcommand{\paramSet}{\ensuremath{\Theta}\xspace}
\newcommand{\paramTrue}{\ensuremath{\bm{\theta}^*}\xspace}
\newcommand{\paramEst}{\ensuremath{\hat{\bm{\theta}}}\xspace}
\newcommand{\paramError}{\ensuremath{\tilde{\bm{\theta}}}\xspace}
\newcommand{\paramErrorMax}{\ensuremath{\tilde{\bm{\vartheta}}}\xspace}
\DeclareMathOperator*{\argmin}{arg\,min}
\title{\LARGE \bf
Robust Adaptive Time-Varying Control Barrier Function\\ with Application to Robotic Surface Treatment}
\author{Yitaek Kim and Christoffer Sloth 
\thanks{Authors are with the Maersk Mc-Kinney Moller Institute, University of Southern Denmark, Denmark {\tt\small \{yik,chsl\}@mmmi.sdu.dk}}
}
\newcommand\submittedtext{%
  \footnotesize \textcopyright \text{ }2025 IEEE.  Personal use of this material is permitted.  Permission from IEEE must be obtained for all other uses, in any current or future media, including reprinting/republishing this material for advertising or promotional purposes, creating new collective works, for resale or redistribution to servers or lists, or reuse of any copyrighted component of this work in other works.}
\newcommand\submittednotice{%
\begin{tikzpicture}[remember picture,overlay]
\node[anchor=south,yshift=10pt] at (current page.south) {\fbox{\parbox{\dimexpr\textwidth-\fboxsep-\fboxrule\relax}{\submittedtext}}};
\end{tikzpicture}%
}
\begin{document}
\maketitle
\submittednotice
\thispagestyle{empty}
\pagestyle{empty}


\begin{abstract}
Set invariance techniques such as control barrier functions (CBFs) can be used to enforce time-varying constraints such as keeping a safe distance from dynamic objects. However, existing methods for enforcing time-varying constraints often overlook model uncertainties. To address this issue, this paper proposes a CBFs-based robust adaptive controller design endowing time-varying constraints while considering parametric uncertainty and additive disturbances. To this end, we first leverage Robust adaptive Control Barrier Functions (RaCBFs) to handle model uncertainty, along with the concept of Input-to-State Safety (ISSf) to ensure robustness towards input disturbances. Furthermore, to alleviate the inherent conservatism in robustness, we also incorporate a set membership identification scheme. We demonstrate the proposed method on robotic surface treatment that requires time-varying force bounds to ensure uniform quality, in numerical simulation and real robotic setup, showing that the quality is formally guaranteed within an acceptable range. 
\end{abstract}

\section{Introduction}\label{sec:introduction}
Set invariance can be achieved in many ways, and using Control Barrier Functions (CBFs) \cite{Ames2019CBFtheoryandapplications} is one of the promising approaches, ensuring forward invariance of a given set. CBFs can be easily implemented with quadratic programming and have been widely used in robotics applications \cite{dawson2022barrier}\cite{Mitsioni2020}\cite{ykgpracbf}. Although CBFs accomplish the forward invariance of a given \textit{static} constraint set, they are not applicable to applications where a constraint set is changed over time: a time-varying set. In this vein, using Time-Varying Control Barrier Functions (TVCBFs) is one of the potential ways to enforce time-varying constraints and successfully applied to human-assist wheelchair control \cite{TvCBF2019} and dynamic obstacle avoidance \cite{Huang2023}. 

However, since CBF-based controllers are highly dependent on the model accuracy, the forward invariance of a given set might not be satisfied if the model uncertainties such as parametric uncertainty and input disturbances are not taken into account. To resolve this issue, adaptive and robust control schemes have been recently proposed. Robust adaptive CBFs (RaCBFs) \cite{lopez2020robust} introduce the tightened safe set by imposing robustness against the uncertainty and can be integrated with Set Membership IDentification (SMID) \cite{SMID1992} to estimate the maximum bound of the uncertainty, so that less conservatism can be achieved. Another uncertainty, input disturbances are frequent in real-world implementations; these deteriorate the performance of CBFs-based controllers. The notion of Input-to-State Safety (ISSf) is defined to treat input disturbances, making a given safe set inflated based on the magnitude of the disturbances \cite{ShishirISSfCBFs2019}; thus they enable minimizing safety violations. Notably, in \cite{TEZUKA202244}, TVCBFs are combined with ISSf conditions to impose robustness against the bounded input disturbances. However, they do not consider inherent model uncertainty in the system model like RaCBFs, so the violation of constraints could occur.

\begin{figure}[t]
    \centering
\includegraphics[width=1\linewidth]{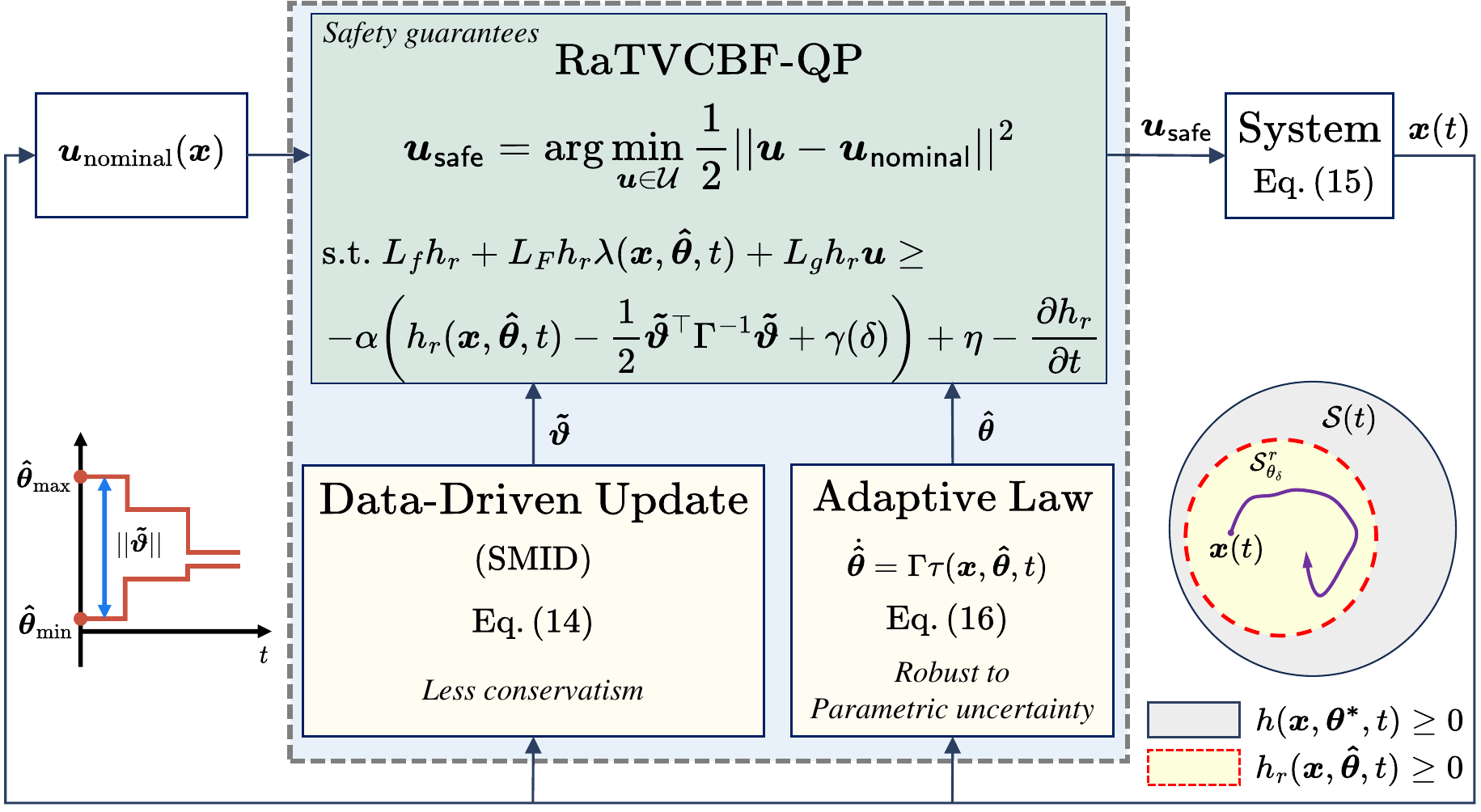}
    \caption{\small{Block diagram of the control system including a robust adaptive time-varying control barrier function with a given time-varying safe set, $\mathcal{S}(t)$. Given an uncertain system with parametric uncertainty and additive disturbances, the safety filter ensures that the system states stay in a robust subset, $\mathcal{S}_{\param_{\delta}}^{r} \subseteq \mathcal{S}(t)$. Data-driven update method, Set Membership IDentification (SMID) is used to reduce the maximum possible error $||\bm{\tilde{\vartheta}}||$.}}
    \vspace{-0.5cm}
    \label{fig:robust_safe_set}
\end{figure}

In this paper, we propose robust adaptive CBFs-based control to ensure set invariance of time-varying sets while considering parametric uncertainty and additive input disturbances. To this end, we use the ISSf condition on top of the RaCBF formulation to make a controller robust against not only input disturbances but also parametric model uncertainty. At the same time, we consider a time-varying control barrier function in the proposed control design to deal with a given set that varies over time. Lastly, we apply SMID to the proposed method to reduce conservatism.

We demonstrate the proposed method on robotic surface treatment application in this paper. Generally, the robotic surface treatment can be accomplished by maintaining a constant contact force between the polishing tool and the surface of a given workpiece \cite{Li}. Many force control schemes including passive \cite{Wang}\cite{Han} and active compliance controllers \cite{Buckmaster} have been used in robotic surface treatment. 

Even if the current robotic systems for surface treatment have focused on controlling constant contact forces to achieve an acceptable quality, they often neglect contact pressures over the surface, which leads to failure to keep uniform quality. Moreover, formal quality guarantees have not been fully investigated yet, which is a main challenge in robotic surface treatment applications \cite{YkCASE2022}\cite{XIAO2020188}. To remedy this challenge, the proposed method is suitable for the robotic surface treatment application to accomplish quality guarantees based on a set invariance scheme. This is achieved by the fact that the proposed method regulates the time-varying reference contact force trajectory, which in turn governs the contact pressure, and then we ultimately regulate the quality of robotic surface treatment.

The contributions in this paper are twofold as follows:
\begin{itemize}
    \item We propose the robust adaptive CBFs-based control design to deal with time-varying constraints, parametric uncertainty, and input disturbances as shown in Fig.~\ref{fig:robust_safe_set}. 
    \item We successfully demonstrate the proposed method on robotic surface treatment application to ensure quality guarantees in simulation and real robotic setup. 
\end{itemize}

The paper is constructed as follows. Section~\ref{sec:problem_formulation} presents a motivating example in the paper. Section~\ref{sec:preliminary_knowledges} provides the preliminaries about RaCBFs, ISSf, and time-varying control barrier functions. The proposed method is presented in Section~\ref{sec:method}. Applying the proposed method to the robotic surface treatment is described in Section~\ref{sec:application}, verified in the numerical simulation and experimentally validated on the real robotic setup with clear comparison results to an existing method. Lastly,  Section~\ref{sec:conclusions} concludes this paper.

\section{Motivating Example}\label{sec:problem_formulation}
In this section, we describe an application example that motivates this work. The proposed method is inspired by a robotic surface treatment task where the quality can be determined based on the material removal rate (MRR) \cite{Li}. The MRR is modeled using Preston's equation \cite{Preston}:
\begin{equation}
\texttt{MRR} = k_{p}\frac{F_c}{A}w, \label{Preston_equation}
\end{equation}
where $k_p>0$ is the Preston coefficient (material dependent), $F_c$ is the contact force between the tool and the surface with area $A$, and $w\geq0$ is the speed of the tool relative to the surface.

To guarantee the quality of a surface treatment task, the MRR should be within given bounds. This can be realized in different ways e.g. by changing the speed $w$ or by changing the force $F_c$; in this work, only the force is changed. This implies that the force should be controlled to comply with the given bounds during the entire task, despite a changing contact area; this leads to time-varying force bounds as shown in Fig.~\ref{fig:mrr_motivation}. 

In this example, the bounds on the force vary due to a changing contact area, while the bounds on MRR are constant. 
In addition, the system model is uncertain due to the Preston coefficient being partially unknown, and uncertainties in the contact model; thus, we consider both parametric uncertainties and uncertainty given by an unknown but bounded additive input disturbance. 
\begin{figure}[t]
    \centering
\includegraphics[width=0.8\linewidth]{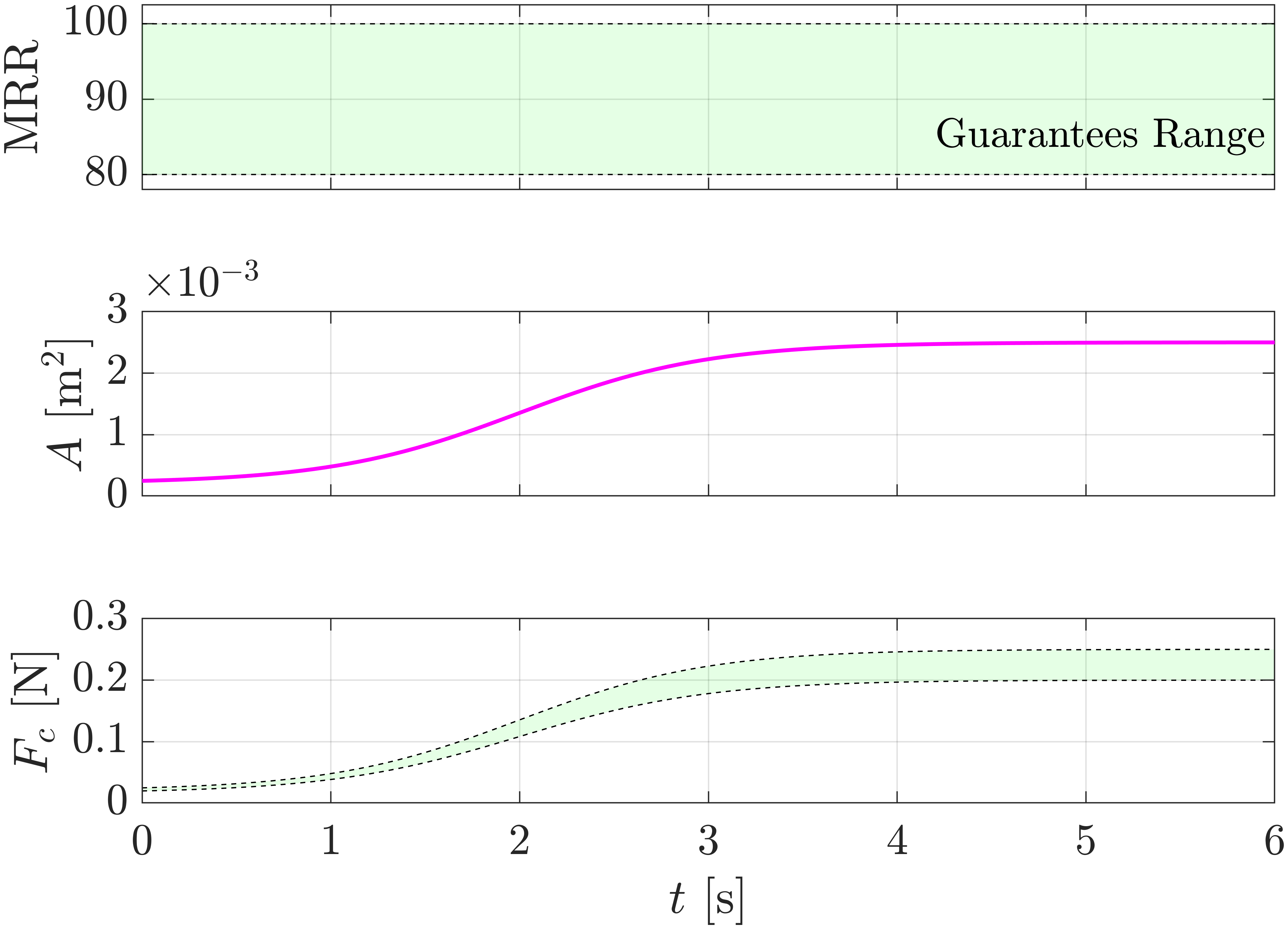}
    \caption{\small{Admissible time-varying force bounds to guarantee the desired MRR range.}}
    \vspace{-0.5cm}
    \label{fig:mrr_motivation}
\end{figure}

In summary, a set invariance method for ensuring quality guarantees via MRR with time-varying constraints despite model uncertainty (e.g. additive disturbance and parametric uncertainty) should be developed.
\section{Preliminaries}\label{sec:preliminary_knowledges}
In this section, we revisit the concepts of safety formulated by Time-Varying Control Barrier Functions, robust adaptive CBFs to parametric uncertainty, Input-to-State Safety (ISSf) to deal with input disturbances in the system, and lastly Set Membership IDentification (SMID) to reduce conservatism of the proposed method.

\subsection{Time-Varying Control Barrier Functions}
Consider the control affine system:
\begin{equation}
        \dot{\bm{x}} = f(\bm{x}) + g(\bm{x})\bm{u},
        \label{eqn:sys}
\end{equation}
where $\bm{x} \in \mathcal{X} \subset \mathbb{R}^n$ is the state, $f,g$ are locally Lipschitz continuous functions, $\bm{u} \in \mathcal{U} \subset \mathbb{R}^m$ represents a control input. Consider a time-varying safety set $\mathcal{S}$ defined as $\mathcal{S} = \{\bm{x} \in \mathcal{X} \text{ }\vert\text{ } h(\bm{x},t) \geq 0\}$. A continuously differentiable function $h: \mathcal{X}\times \mathbb{R} \rightarrow \mathbb{R}$ is a Time-Varying Control Barrier Function (TVCBF) if there exist constants $K > 0$ and $C > 0$ satisfying the following for all $t\geq0$ \cite{TvCBF2019}:
\begin{align}
        \sup_{\bm{u}\in\mathcal{U}}\{L_fh(\bm{x},t) + &L_gh(\bm{x},t)\bm{u}\} + \frac{\partial h}{\partial t}(\bm{x},t) \nonumber \\ 
            &\geq -K(h(\bm{x},t)) -C \quad \forall  \bm{x}\in\mathcal{S}(t). \label{pre:cbf_condition}
\end{align}
If $h$ is a TVCBF for \eqref{eqn:sys}, then any locally Lipschitz continuous controller $\bm{u}$ satisfying the following for all $\forall \bm{x}\in\mathcal{S}(t)$:
\begin{equation}
    L_fh(\bm{x},t) + L_gh(\bm{x},t)\bm{u} + \frac{\partial h}{\partial t} \geq -K(h(\bm{x},t)) -C \label{pre:safety_condition}
\end{equation}
ensures that $x(t)\in\mathcal{S}(t)$ for all $t\geq0$.

\subsection{Input-to-State Safety Control Barrier Functions}
Unmodeled dynamics and disturbances may result in safety violations if we directly use standard CBFs \cite{Ames2019CBFtheoryandapplications}. We now introduce the input disturbance, $\bm{d} \in \mathbb{R}^m, ||\bm{d}||_{\infty}\leq \delta, \delta \geq 0$ in \eqref{eqn:sys}: 
\begin{equation}
        \dot{\bm{x}} = f(\bm{x}) + g(\bm{x})(\bm{u}+\bm{d}),
        \label{eqn:sys_input_disturbance}
\end{equation}
and define the larger safe set as:
\begin{equation}
    \mathcal{S}_{\delta} = \{\bm{x} \in \mathcal{X} \text{ }\vert\text{ } h_{\delta}(\bm{x}) \geq 0\},
\end{equation}
where $\mathcal{S} \subseteq \mathcal{S}_{\delta}$ and $h_{\delta}(\bm{x}) = h(\bm{x}) + \gamma(\delta)$. We say that the system \eqref{eqn:sys_input_disturbance} is \textit{Input-to-State Safe} (ISSf) if there exists $\gamma$ and $\delta$ such that for all $\bm{d}(t)$, $\mathcal{S}_{\delta}$ is forward invariant \cite{ShishirISSfCBFs2019}. An ISSf control barrier function is defined as follows:
\begin{theorem}[\cite{ShishirISSfCBFs2019}]
    A function $h$ is a \textit{Input-to-State Safe Control Barrier Function} (ISSf-CBF) if there exists extend class $\mathcal{K}_{\infty}$ functions $\alpha$ and $\iota$ such that $\forall  \bm{x}\in\mathcal{S}_{\delta}$: 
    \begin{equation}
        \sup_{\bm{u}\in\mathcal{U}}\{L_fh(\bm{x}) + L_gh(\bm{x})(\bm{u}+\bm{d})\} 
            \geq -\alpha(h(\bm{x})) - \iota(||\bm{d}||_{\infty}). \label{pre:issfcbf_condition_1}
    \end{equation}
\end{theorem}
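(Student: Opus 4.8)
The plan begins by recognizing that the displayed statement is \emph{definitional} rather than an implication to be derived: it assigns the label ``ISSf-CBF'' to any continuously differentiable $h$ for which extended class $\mathcal{K}_{\infty}$ functions $\alpha$ and $\iota$ exist making \eqref{pre:issfcbf_condition_1} hold on $\mathcal{S}_{\delta}$. That is, inequality \eqref{pre:issfcbf_condition_1} \emph{is} the defining property of the term, not a consequence of some prior hypothesis, so there is no implication ``$P \Rightarrow Q$'' whose truth must be established. Accordingly, I would not attempt a derivation but instead verify that the definition is well posed and consistent with the surrounding framework.

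Next, I would record the consistency checks that make the definition meaningful. Because $\iota$ is (extended) class $\mathcal{K}_{\infty}$ it satisfies $\iota(0)=0$, so in the disturbance-free limit $||\bm{d}||_{\infty}=0$ the right-hand side collapses to $-\alpha(h(\bm{x}))$ and \eqref{pre:issfcbf_condition_1} reduces exactly to the standard CBF inequality; this confirms that an ISSf-CBF specializes to an ordinary CBF and hence genuinely generalizes it. In the same spirit, since $h_{\delta}(\bm{x}) = h(\bm{x}) + \gamma(\delta) \geq h(\bm{x})$ one obtains $\mathcal{S} \subseteq \mathcal{S}_{\delta}$, so the inflated set on which the condition is posed is consistent with the stated relation $\mathcal{S} \subseteq \mathcal{S}_{\delta}$. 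Finiteness of the supremum over admissible inputs, together with the $\mathcal{K}_{\infty}$ regularity of $\alpha$ and $\iota$, ensures the inequality is meaningful pointwise on $\mathcal{S}_{\delta}$.

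Finally, I would be explicit about what actually carries a proof burden, so as not to conflate it with the definition. The substantive companion result---that the \emph{existence} of such an $h$ renders $\mathcal{S}_{\delta}$ forward invariant under any locally Lipschitz controller meeting \eqref{pre:issfcbf_condition_1}, i.e.\ the input-to-state safety property itself---is where real work would occur, via a comparison-lemma/Nagumo argument along the lines of the TVCBF invariance claim following \eqref{pre:safety_condition}. That implication, however, is logically downstream of and distinct from the displayed statement. The ``main obstacle'' is therefore expository rather than technical: recognizing that the item labelled a theorem is in fact the definition feeding that later invariance result, so that it imposes no derivation of its own.
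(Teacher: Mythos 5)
Your assessment is correct and matches the paper's own treatment: the paper gives no proof of this statement, importing it by citation from \cite{ShishirISSfCBFs2019}, and the displayed inequality indeed functions as the \emph{definition} of an ISSf-CBF, with the substantive claim (that a controller satisfying \eqref{pre:issfcbf_condition_1} renders \eqref{eqn:sys_input_disturbance} ISSf with respect to $\mathcal{S}_{\delta}$) stated separately in the following sentence and likewise deferred to the cited reference rather than proven. Your consistency checks --- $\iota(0)=0$ collapsing \eqref{pre:issfcbf_condition_1} to the standard CBF inequality, and $h_{\delta}(\bm{x})=h(\bm{x})+\gamma(\delta)$ giving $\mathcal{S}\subseteq\mathcal{S}_{\delta}$ --- are sound, and you correctly locate the only genuine proof burden in the downstream invariance result.
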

If a function, $h$ is ISSf-CBF, and we choose control input $\bm{u}$ satisfying \eqref{pre:issfcbf_condition_1}, then the system \eqref{eqn:sys_input_disturbance} is ISSf with respect to $\mathcal{S}_{\delta}$.
\subsection{Robust adaptive Control Barrier Functions (RaCBFs)}
In the presence of parametric uncertainty in the system, we consider the following control affine system and adaptive parameter estimator dynamics simultaneously \cite{Taylor2020}: 
\begin{align}
    \begin{bmatrix}
    \dot{\bm{x}} \\
    \dot{\paramEst}
    \end{bmatrix} = \begin{bmatrix}
    f(\bm{x}) + F(\bm{x})\paramTrue+g(\bm{x})\bm{u} \\
    \Gamma\tau(\bm{x},\paramEst)
    \end{bmatrix} \label{pre:adaptive_ds} \\
    \text{with} \quad \tau(\bm{x}, \paramEst) = -\Bigg(\frac{\partial h_r}{\partial \bm{x}}(\bm{x}, \paramEst)F(\bm{x})\Bigg)^{\top}, \label{pre:tau}
\end{align}
where $\paramTrue \in \Theta \subset \mathbb{R}^k$ is the unknown parameter, and $\paramEst \in \paramSet$ is the estimated parameter, and $F:\mathcal{X}\rightarrow \mathbb{R}^{n \times k}$ is smoothly continuous on $\mathcal{X}$ such that $F(\textbf{0})=\textbf{0}$ , and $\tau$ is the parameter update rule, and $\Gamma$ is an adaptive gain. The parameterized safe set $\mathcal{S}_{\param}$ is defined as $\mathcal{S}_{\param} = \{\bm{x}\in\mathcal{X} \text{ } \vert \text{ } h_r(\bm{x},\param) \geq 0\}$ where $h_r: \mathcal{X} \times \paramSet \rightarrow \mathbb{R}$ is a continuously differentiable function. To achieve robustness against the parameter uncertainty, the tightened safety subset, $\mathcal{S}_{\param}^{r} \subseteq \mathcal{S}_{\param}$ is defined as:
\begin{align}
    \mathcal{S}_{\param}^{r} = \{\bm{x}\in\mathcal{X}:h_r(\bm{x},\param)\geq\frac{1}{2}\paramErrorMax^{\top}\Gamma^{-1}\paramErrorMax\}, \label{racbf_condition}
\end{align}
where $\paramErrorMax \in\mathbb{R}^k$ is a maximum possible parameter estimation error of each element of $\paramTrue - \paramEst$, and  $\Gamma \in \mathbb{R}^{n \times n}$ is an adaptive gain. Then, safety can be ensured based on the following theorem:
\begin{theorem}[\cite{lopez2020robust}]
A function $h_r(\bm{x},\param)$ is a RaCBF if there exists an extended class $\mathcal{K}_{\infty}$ function $\alpha$ 
 and locally Lipschitz continuous control input $\bm{u}$ such that for the system \eqref{pre:adaptive_ds} and any $\param \in \paramSet$, and for all $\bm{x}\in \mathcal{S}^r_{\param}$:
\begin{align}
    \begin{aligned}
    \sup_{\bm{u}\in\mathcal{U}}\Bigg[\frac{\partial h_r}{\partial \bm{x}}(\bm{x},\param)\bigg(&\mathcal{F}(\bm{x},\param)+g(\bm{x})\bm{u}\bigg)\Bigg] \\
    &\geq -\alpha\bigg(h_r(\bm{x},\param)-\frac{1}{2}\paramErrorMax^{\top}\Gamma^{-1}\paramErrorMax\bigg) \label{racbf_constraint}
    \end{aligned}
\end{align}
with
\begin{align}
    \qquad &\mathcal{F}(\bm{x},\param) = f(\bm{x}) + F(\bm{x})\lambda(\bm{x},\param),\label{pre:F}   \\
    &\lambda(\bm{x},\param) \triangleq {\param} - \Gamma\Bigg(\frac{\partial h_r}{\partial \param}(\bm{x},\param)\Bigg)^{\top},\label{pre:lambda} \\
    & \lambda_{\text{min}}(\Gamma) \geq \frac{\vert\vert \paramErrorMax \vert\vert^2}{2h_r(\bm{x},\param)}. \nonumber
\end{align}
If a such function exists then the system \eqref{pre:adaptive_ds} is safe w.r.t $\mathcal{S}^r_{\param}$.
\end{theorem}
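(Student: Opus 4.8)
The plan is to reduce the claim to a standard forward-invariance (Nagumo/comparison-lemma) argument by constructing an augmented barrier function on the joint state–parameter space that absorbs the unknown parameter error. First I would introduce the estimation error $\paramError = \paramTrue - \paramEst$ and, since $\paramTrue$ is constant, observe that its dynamics are $\dot{\paramError} = -\dot{\paramEst} = -\Gamma\tau(\bm{x},\paramEst)$ with $\tau$ as in \eqref{pre:tau}. I would then study the composite function
\[
    H_a(\bm{x},\paramEst) = h_r(\bm{x},\paramEst) - \tfrac{1}{2}\paramError^{\top}\Gamma^{-1}\paramError,
\]
whose zero-superlevel set I will eventually connect to $\mathcal{S}_{\param}^{r}$.

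The key step is to differentiate $H_a$ along \eqref{pre:adaptive_ds} and show that the estimator is engineered precisely so the troublesome unknown term cancels. Expanding $\dot h_r = \frac{\partial h_r}{\partial\bm x}(f+F\paramTrue+g\bm u)+\frac{\partial h_r}{\partial\paramEst}\dot{\paramEst}$ and substituting $\paramTrue=\paramEst+\paramError$, the definitions \eqref{pre:F}–\eqref{pre:lambda} let me regroup the $\paramEst$-dependent and update-law terms into $\frac{\partial h_r}{\partial\bm x}\mathcal F(\bm x,\param)$, leaving $\dot h_r = \frac{\partial h_r}{\partial\bm x}(\mathcal F + g\bm u) + \frac{\partial h_r}{\partial\bm x}F\paramError$. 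Meanwhile $\frac{d}{dt}\bigl(\tfrac12\paramError^{\top}\Gamma^{-1}\paramError\bigr) = \paramError^{\top}\Gamma^{-1}\dot{\paramError} = \frac{\partial h_r}{\partial\bm x}F\paramError$ by the choice of $\tau$. Subtracting, the $\paramError$ terms cancel exactly, yielding the fully known expression $\dot H_a = \frac{\partial h_r}{\partial\bm x}(\mathcal F + g\bm u)$. This cancellation is the crux of the proof and the reason the adaptation law takes the form \eqref{pre:tau}.

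With a controller satisfying \eqref{racbf_constraint}, I then bound $\dot H_a \ge -\alpha\bigl(h_r - \tfrac12\paramErrorMax^{\top}\Gamma^{-1}\paramErrorMax\bigr)$. Since $\paramErrorMax$ dominates the error, $\paramError^{\top}\Gamma^{-1}\paramError \le \paramErrorMax^{\top}\Gamma^{-1}\paramErrorMax$, so $H_a \ge h_r - \tfrac12\paramErrorMax^{\top}\Gamma^{-1}\paramErrorMax$; monotonicity of the extended class-$\mathcal K_{\infty}$ function $\alpha$ upgrades this to $\dot H_a \ge -\alpha(H_a)$, and the comparison lemma gives forward invariance of $\{H_a\ge 0\}$. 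Finally, the same bound yields the inclusion $\mathcal S_{\param}^{r}\subseteq\{H_a\ge0\}$, and any point with $H_a\ge0$ satisfies $h_r\ge\tfrac12\paramError^{\top}\Gamma^{-1}\paramError\ge0$; hence initializing in $\mathcal S_{\param}^{r}$ keeps $h_r\ge0$ for all time, i.e. the trajectory remains in $\mathcal S_{\param}$. I would also invoke the gain condition $\lambda_{\min}(\Gamma)\ge\lVert\paramErrorMax\rVert^{2}/(2h_r)$ to ensure $\mathcal S_{\param}^{r}$ is nonempty.

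The main obstacle I anticipate is making the regrouping in the second step rigorous: I must carry out the transpose and dimensional bookkeeping so that $\frac{\partial h_r}{\partial\paramEst}\Gamma\bigl(\frac{\partial h_r}{\partial\bm x}F\bigr)^{\top}$ and $\frac{\partial h_r}{\partial\bm x}F\Gamma\bigl(\frac{\partial h_r}{\partial\param}\bigr)^{\top}$ are identified as the same scalar, which requires $\Gamma=\Gamma^{\top}$, and confirm that the cancellation holds pointwise rather than only in a worst-case sense. The remaining relation among $\{H_a\ge0\}$, $\mathcal S_{\param}^{r}$, and $\mathcal S_{\param}$ must then be stated carefully, since forward invariance is established directly for $\{H_a\ge0\}$ and only transferred to the robust set through the inclusion above.
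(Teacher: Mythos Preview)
Your proposal is correct and mirrors the argument the paper uses (the paper states this theorem as a cited result, but its own proof of Proposition~\ref{proposition1} follows exactly this template, citing \cite{lopez2020robust}): define the composite barrier $h=h_r-\tfrac{1}{2}\paramError^{\top}\Gamma^{-1}\paramError$, differentiate along \eqref{pre:adaptive_ds}, exploit the adaptation law \eqref{pre:tau} so the $\paramError$-dependent terms cancel and $\dot h$ reduces to the computable quantity $\frac{\partial h_r}{\partial\bm x}(\mathcal F+g\bm u)$, then invoke \eqref{racbf_constraint} together with $\paramError^{\top}\Gamma^{-1}\paramError\le\paramErrorMax^{\top}\Gamma^{-1}\paramErrorMax$ and monotonicity of $\alpha$ to obtain $\dot h\ge-\alpha(h)$. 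The only cosmetic difference is that the paper performs the regrouping via an explicit add-and-subtract of $\frac{\partial h_r}{\partial\bm x}F\bigl(\paramEst-\Gamma(\frac{\partial h_r}{\partial\paramEst})^{\top}\bigr)$ before zeroing the bracket with $\tau$, whereas you compute the cancellation directly; your observation that $\Gamma=\Gamma^{\top}$ is needed for the scalar identification is apt and implicit in the paper.
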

\subsection{Set Membership IDentification (SMID)}
Set Membership IDentification (SMID) is leveraged to compute a  bound on the parameter estimation error based on input-output data \cite{SMID1992}. We consider a linear regression equation
\[
\mathcal{Y}_j=\mathcal{D}_j\bm{\theta}+d_j,
\]
where $d_j$ is a disturbance,  $\bm{\theta}$ is a vector of unknown constant parameters, and $\mathcal{Y}_j$, $\mathcal{D}_j$ are measured quantities.
Let $\mathcal{H} = \{\mathcal{Y}_j, \mathcal{D}_j\}^M_{j=1}$ be a data history with size  $M\in\mathbb{N}$. To bound the value of $\bm{\theta}^*$ in \eqref{pre:adaptive_ds}, we define $\mathcal{Y}_{j} = \dot{\bm{x}}(t_j) - f(\bm{x}(t_j)) - g(\bm{x}(t_j))\bm{u}(t_j)$ and $\mathcal{D}_j=F(x(t_j))$, where $t_j$ is the time at which the data is recorded. Consider the following sequence of bounded parameter sets with $\Xi_0 = [\underline{\bm{\theta}}_0,\overline{\bm{\theta}}_0]$ and $\forall j\in M$: 
\begin{align}
\Xi_{t_j} =\{\bm{\theta}\in\Xi_{t_{j-1}}| -\varepsilon\bm{1}_n \leq \mathcal{Y}_{j} - \mathcal{D}_{j}\bm{\theta} \leq \varepsilon\bm{1}_n\}, 
\end{align}
where $[\underline{\bm{\theta}}_0,\overline{\bm{\theta}}_0]$ is the initial lower and upper bound of the unknown parameter, and $\bm{1}_n$ is an $n-$dimensional vector of ones, and $\varepsilon \in \mathbb{R}$ is the precision variable that must be chosen such that $||d_j||_\infty\leq\epsilon$ for all $j$. The sequence of sets, $\Xi_{t_j}$ can be calculated based on Linear Programming (LP) such that $\Xi_{t_j} \subseteq \Xi_{t_{j-1}} \subseteq \cdots\subseteq\Xi_{0}$ and $\bm{\theta} \in \Xi_{t_j}$ \cite{smidbook}.
\section{Proposed Method}\label{sec:method}
In this section, we propose a robust adaptive safe controller that is robust against input disturbance and parametric model uncertainty simultaneously. To this end, we consider the ISSf condition on the formulation of RaCBF. At the same time, we take into account a time-varying control barrier function in the controller design in an integrated formulation and lastly provide proof of safety.

\subsection{Parametric Adaptive System Model with Uncertainties}
Consider the following control affine system with parameter estimation dynamics and input disturbances:
\begin{align}
    \begin{bmatrix}
    \dot{\bm{x}} \\
    \dot{\paramEst}
    \end{bmatrix} = \begin{bmatrix}
    f(\bm{x}) + F(\bm{x})\paramTrue+g(\bm{x})(\bm{u}+\bm{d}(t)) \\
    \Gamma\tau(\bm{x},\paramEst,t)
    \end{bmatrix}  \label{pf:sys_real_input_d}\\
    \text{with} \quad \tau(\bm{x}, \paramEst,t) = -\Bigg(\frac{\partial h_r}{\partial \bm{x}}(\bm{x}, \paramEst,t)F(\bm{x})\Bigg)^{\top}, \label{pre:tau_raissf}
\end{align}
where $\bm{x} \in \mathcal{X} \subset \mathbb{R}^n$ is the state, $f,g$ are locally Lipschitz continuous functions, $\bm{u} \in \mathcal{U} \subset \mathbb{R}^m$ represents a control input, and $\bm{d}(t) \in \mathbb{R}^m$ is a bounded input disturbance, $||\bm{d}(t)||_{\infty}\leq \delta \in \mathbb{R}_{+}$, and $F:\mathcal{X}\rightarrow \mathbb{R}^{n \times k}$ is smooth on $\mathcal{X}$ and models parametric uncertainty for $f$ such that $F(\bm{0})=\bm{0}$, and $\paramTrue \in \paramSet \subset \mathbb{R}^k$ is the vector of unknown parameters. In the following, we include the time derivative term of $h_r$ to deal with time-varying constraints.

\subsection{Robust Adaptive Time-Varying Control Barrier Function}
We define the following time-varying robust parameterized larger safe set, $\mathcal{S}^r_{\theta} \subseteq \mathcal{S}^r_{\theta_{\delta}}$: 
\begin{equation}
    \mathcal{S}^r_{\theta_{\delta}} = \{\bm{x}\in\mathcal{X}:h_r(\bm{x},\param,t)-\frac{1}{2}\paramErrorMax^{\top}\Gamma^{-1}\paramErrorMax +\gamma(\delta)\geq 0\},\label{racbfissf_condition}
\end{equation}
where $\gamma(\cdot)$ is the extended class $\mathcal{K}_{\infty}$ function. And if we choose $\lambda_{\text{min}}(\Gamma) \geq \frac{\vert\vert \paramErrorMax \vert\vert^2}{2h_r(\bm{x},\param,t)}$ to satisfy $h = h_r -\frac{1}{2}\paramError^{\top}\Gamma^{-1}\paramError \geq h_r -\frac{1}{2}\paramErrorMax^{\top}\Gamma^{-1}\paramErrorMax \geq 0$  according to \cite{lopez2020robust},  then we obtain the following definition:

\begin{definition}
    A continuously differentiable function $h$ is a \textit{Robust adaptive Time-Varying Control Barrier Function} (RaTVCBF) for the system \eqref{pf:sys_real_input_d} with bounded disturbance, $||\bm{d}(t)||_{\infty}\leq \delta$ if there exist extended class $\mathcal{K}_{\infty}$ functions $\alpha$ and $\gamma$ such that for any $\param \in \paramSet$ and for all $\bm{x} \in \mathcal{S}^r_{\param_{\delta}}$:
\begin{align}
    &\sup_{\bm{u}\in\mathcal{U}}\Bigg[\frac{\partial h_r}{\partial \bm{x}}(\bm{x},\param,t)\bigg(\mathcal{F}(\bm{x},\param,t)+g(\bm{x})\bm{u}\bigg)\Bigg]\nonumber \\
    &\geq -\alpha\bigg(h_r(\bm{x},\param,t) - \frac{1}{2}\paramErrorMax^{\top}\Gamma^{-1}\paramErrorMax+\gamma(\delta)\bigg) +\eta  - \frac{\partial h_r}{\partial t}\label{raissf_constraint} \\
    &\text{with }\qquad \eta = \Big|\Big|L_gh_r(\bm{x},\param,t)\Big|\Big|\delta, \nonumber
\end{align}
where $\mathcal{F}(\bm{x},\param,t) = f(\bm{x}) + F(\bm{x})\lambda(\bm{x},\param,t)$ and $\lambda(\bm{x},\param,t) \triangleq {\param} - \Gamma\Big(\frac{\partial h_r}{\partial \param}(\bm{x},\param,t)\Big)^{\top}$.
\end{definition}
 We now present the proposition for input-to-state safety for time-varying constraints in the presence of input disturbances and parametric uncertainties.
\begin{proposition} 
Let $h$ be a RaTVCBF, and the functions $f,g$ are locally Lipschitz continuous. Then, any locally Lipschitz continuous controller $\bm{u}$ satisfying:
\begin{align}
       L_fh_r&+ L_Fh_r\lambda(\bm{x},\paramEst,t)+L_gh_r\bm{u} - \eta + \frac{\partial h_r}{\partial t} \nonumber \\ 
    &\geq -\alpha\bigg(h_r(\bm{x},\paramEst,t) - \frac{1}{2}\paramErrorMax^{\top}\Gamma^{-1}\paramErrorMax + \gamma(\delta)\bigg),
        \label{pro:safe_constr}
\end{align}
where $\eta = \Big|\Big|L_gh_r(\bm{x},\param,t)\Big|\Big|\delta$, $L_fh_r = \frac{\partial h_r}{\partial \bm{x}}(\bm{x},\paramEst,t)f(\bm{x})$, $L_Fh_r = \frac{\partial h_r}{\partial \bm{x}}(\bm{x},\paramEst,t)F(\bm{x})$, and $L_gh_r = \frac{\partial h_r}{\partial \bm{x}}(\bm{x},\paramEst,t)g(\bm{x})$, renders the system \eqref{pf:sys_real_input_d} ISSf with respect to $\mathcal{S}_{\param_{\delta}}^{r}$.
\label{proposition1}
\end{proposition}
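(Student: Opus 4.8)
The plan is to adapt the RaCBF argument of \cite{lopez2020robust} by augmenting it with the ISSf disturbance bound and with the explicit time-derivative term that time-varying sets require. First I would introduce the composite barrier candidate
\[
h(\bm{x},\paramEst,t) = h_r(\bm{x},\paramEst,t) - \tfrac{1}{2}\paramError^{\top}\Gamma^{-1}\paramError + \gamma(\delta), \qquad \paramError = \paramTrue - \paramEst,
\]
which folds the parameter-estimation error into the barrier itself. Since \paramErrorMax\ upper-bounds the error so that $\tfrac{1}{2}\paramError^{\top}\Gamma^{-1}\paramError \le \tfrac{1}{2}\paramErrorMax^{\top}\Gamma^{-1}\paramErrorMax$, the robust set satisfies $\mathcal{S}^r_{\param_{\delta}} \subseteq \{h \ge 0\}$, so nonnegativity of $h$ is a certificate for membership in \eqref{racbfissf_condition}. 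The goal is therefore to show that $\{h \ge 0\}$ is forward invariant under the controller \eqref{pro:safe_constr} for every admissible disturbance, which is exactly the ISSf claim.

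Next I would differentiate $h$ along the closed loop \eqref{pf:sys_real_input_d}. Because \paramTrue\ is constant, $\dot{\paramError} = -\dot{\paramEst} = \Gamma(L_Fh_r)^{\top}$ by the adaptation law \eqref{pre:tau_raissf}. Substituting $\dot{\bm{x}}$ gives $\tfrac{\partial h_r}{\partial \bm{x}}\dot{\bm{x}} = L_fh_r + L_Fh_r\paramTrue + L_gh_r\bm{u} + L_gh_r\bm{d}$, the quadratic correction contributes $-\paramError^{\top}\Gamma^{-1}\dot{\paramError} = -L_Fh_r\paramError$, and the chain-rule term through $\paramEst$ contributes $\tfrac{\partial h_r}{\partial \param}\dot{\paramEst} = -\tfrac{\partial h_r}{\partial \param}\Gamma(L_Fh_r)^{\top}$; the $\gamma(\delta)$ term drops out for fixed $\delta$.

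The key algebraic step, and the part I expect to be the main obstacle, is collapsing the three parameter-dependent contributions into $L_Fh_r\lambda(\bm{x},\paramEst,t)$ so that the unknown \paramTrue\ is eliminated in favour of the measurable \paramEst. Using $L_Fh_r\paramTrue - L_Fh_r\paramError = L_Fh_r\paramEst$ together with symmetry of $\Gamma$ (so that $\tfrac{\partial h_r}{\partial \param}\Gamma(L_Fh_r)^{\top} = L_Fh_r\Gamma(\tfrac{\partial h_r}{\partial \param})^{\top}$ as scalars), these terms combine to $L_Fh_r[\paramEst - \Gamma(\tfrac{\partial h_r}{\partial \param})^{\top}] = L_Fh_r\lambda(\bm{x},\paramEst,t)$ by the definition of $\lambda$. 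This yields $\dot{h} = L_fh_r + L_Fh_r\lambda + L_gh_r\bm{u} + L_gh_r\bm{d} + \tfrac{\partial h_r}{\partial t}$. Careful bookkeeping of transposes and correct use of symmetry and positive-definiteness of $\Gamma$ are what make this cancellation go through; it is the crux of why the scheme is certifiable without knowing \paramTrue.

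Finally I would bound the disturbance and close the inequality. By H\"older's inequality and $\|\bm{d}\|_{\infty} \le \delta$, we get $L_gh_r\bm{d} \ge -\|L_gh_r\|\delta = -\eta$, hence $\dot{h} \ge L_fh_r + L_Fh_r\lambda + L_gh_r\bm{u} - \eta + \tfrac{\partial h_r}{\partial t}$. Applying the controller condition \eqref{pro:safe_constr} bounds the right-hand side below by $-\alpha(h_r - \tfrac{1}{2}\paramErrorMax^{\top}\Gamma^{-1}\paramErrorMax + \gamma(\delta))$, and since $h \ge h_r - \tfrac{1}{2}\paramErrorMax^{\top}\Gamma^{-1}\paramErrorMax + \gamma(\delta)$ with $\alpha$ an extended class $\mathcal{K}_{\infty}$ function (hence monotone), this gives $\dot{h} \ge -\alpha(h)$. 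A comparison-lemma argument, as in the standard CBF forward-invariance proof, then shows that $h(\bm{x}(0),\paramEst(0),0) \ge 0$ implies $h(\bm{x}(t),\paramEst(t),t) \ge 0$ for all $t \ge 0$ and every admissible disturbance, so the trajectory remains in $\{h \ge 0\}$ and the system \eqref{pf:sys_real_input_d} is ISSf with respect to $\mathcal{S}^r_{\param_{\delta}}$.
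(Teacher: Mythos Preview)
Your argument is correct and follows essentially the same line as the paper: define the composite barrier $h=h_r-\tfrac12\paramError^\top\Gamma^{-1}\paramError$ (you additionally fold $\gamma(\delta)$ into it), differentiate along \eqref{pf:sys_real_input_d}, use the adaptation law to collapse the $\paramTrue$-dependent terms into $L_Fh_r\lambda$, bound $L_gh_r\bm d\ge-\eta$, apply \eqref{pro:safe_constr}, and finish via monotonicity of $\alpha$ together with $\paramError^\top\Gamma^{-1}\paramError\le\paramErrorMax^\top\Gamma^{-1}\paramErrorMax$. The only difference is packaging: by absorbing $\gamma(\delta)$ into $h$ you land directly at $\dot h\ge-\alpha(h)$, whereas the paper keeps it outside, fixes $\gamma(\delta)=-\alpha^{-1}(-\epsilon\delta^2/4)$, and concludes in the ISSf-CBF form $\dot h\ge-\alpha(h)-\epsilon\delta^2/4$ before invoking the ISSf-CBF theorem.
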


\begin{proof} 
The proof follows \cite{lopez2020robust}. Let a candidate control barrier function $h$ be:
\begin{equation}
    h(\bm{x},\paramEst,t) = h_r(\bm{x},\paramEst,t) -  \frac{1}{2}\paramError^{\top}\Gamma^{-1}\paramError,
\end{equation}
where $\paramError =\paramTrue -  \paramEst$, and its derivative is
\begin{equation}
    \dot{h}(\bm{x},\paramEst,t) = \dot{h}_r(\bm{x},\paramEst,t) + \paramError^{\top}\Gamma^{-1}\dot{\paramEst} \label{pf:dot_h} \\
\end{equation}
with
\begin{align}
    \dot{h}_r = &\frac{\partial h_r}{\partial \bm{x}}(\bm{x},\paramEst,t)\bigg(f(\bm{x}) + F(\bm{x})\paramTrue+g(\bm{x})(\bm{u}+\bm{d})\bigg) \nonumber \\
    &+\frac{\partial h_r}{\partial \paramEst}(\bm{x},\paramEst,t)\dot{\paramEst}+\frac{\partial h_r}{\partial t}(\bm{x},\paramEst,t) \nonumber
\end{align}
For the sake of brevity, we replace $\frac{\partial h_r}{\partial \bm{x}}(\bm{x},\paramEst,t)$ and $\frac{\partial h_r}{\partial\paramEst}(\bm{x},\paramEst,t)$ with $\frac{\partial h_r}{\partial \bm{x}}$ and $\frac{\partial h_r}{\partial \paramEst}$, respectively.
We add and subtract the following term in \eqref{pf:dot_h}: 
\begin{equation}
    \frac{\partial h_r}{\partial \bm{x}}F(\bm{x})\Bigg(\paramEst -\Gamma\bigg(\frac{\partial h_r}{\partial\paramEst} \bigg)^{\top}\Bigg).
\end{equation}
Then, we obtain the following equation and inequalities with $\lambda(\bm{x},\paramEst,t) \triangleq {\paramEst} - \Gamma\big(\frac{\partial h_r}{\partial \paramEst}(\bm{x},\paramEst,t)\big)^{\top}$ from \cite{Taylor2020}:
\begin{align}
    \dot{h} &= \frac{\partial h_r}{\partial \bm{x}}\bigg(f(\bm{x}) + F(\bm{x})\lambda(\bm{x},\paramEst,t)+g(\bm{x})\bm{u}\bigg) \nonumber\\
    &\quad +\bigg( \frac{\partial h_r}{\partial\paramEst}\Gamma +{\paramError}^{\top}\bigg)\Bigg( \bigg(\frac{\partial h_r}{\partial \bm{x}}F(\bm{x})\bigg)^{\top} + \tau(\bm{x},\paramEst,t)\Bigg) \nonumber \\
    &\quad +\frac{\partial h_r}{\partial \bm{x}}g(\bm{x})\bm{d} + \frac{\partial h_r}{\partial t}(\bm{x},\paramEst,t)\nonumber \\
    &\geq \frac{\partial h_r}{\partial \bm{x}}\bigg(f(\bm{x}) + F(\bm{x})\lambda(\bm{x},\paramEst,t)+g(\bm{x})\bm{u}\bigg) \nonumber\\ 
    &\quad + \bigg( \frac{\partial h_r}{\partial \paramEst}\Gamma +{\paramError}^{\top}\bigg)\Bigg( \bigg(\frac{\partial h_r}{\partial \bm{x}}F(\bm{x})\bigg)^{\top} + \tau(\bm{x},\paramEst,t)\Bigg) \nonumber \\
    &\quad -\Big|\Big|L_gh_r\Big|\Big|\delta + \frac{\partial h_r}{\partial t}(\bm{x},\paramEst,t)\label{pf:last_inequ}. 
\end{align}

In the last inequality of \eqref{pf:last_inequ}, we choose $\tau(\bm{x},\paramEst,t) = -\Big(\frac{\partial h_r}{\partial \bm{x}}F(\bm{x})\Big)^{\top}$ similar to \eqref{pre:tau} provided by \cite{Taylor2020}, and then the second term will be zero. Afterwards, we introduce the maximum possible error, $\paramErrorMax$, and if the following inequality is satisfied: 
\begin{align}
&\frac{\partial h_r}{\partial \bm{x}}\bigg(f(\bm{x}) + F(\bm{x})\lambda(\bm{x},\paramEst,t)+g(\bm{x})\bm{u}\bigg) - \eta + \frac{\partial h_r}{\partial t} \nonumber\\
&\geq -\alpha\bigg(h_r(\bm{x},\paramEst,t) - \frac{1}{2}\paramErrorMax^{\top}\Gamma^{-1}\paramErrorMax + \gamma(\delta)\bigg), \label{prof:safe_cond}
\end{align}
where $\eta = \Big|\Big|L_gh_r \Big|\Big|\delta$, and we choose $\gamma(\delta) = -\alpha^{-1}(-\frac{\epsilon\delta^2}{4})$ from \cite{AlanTISSfCBFs2022}, and then finally the following sufficient condition of forward invariance of the set, $\mathcal{S}^r_{\param_\delta}$ can be obtained:
\begin{align}
       \dot{h} &\geq \frac{\partial h_r}{\partial \bm{x}}\bigg(f(\bm{x}) + F(\bm{x})\lambda(\bm{x},\paramEst,t)+g(\bm{x})\bm{u}\bigg) - \eta + \frac{\partial h_r}{\partial t}\nonumber\\
       &\geq -\alpha\bigg(h_r(\bm{x},\paramEst,t) - \frac{1}{2}\paramErrorMax^{\top}\Gamma^{-1}\paramErrorMax + \gamma(\delta)\bigg) \nonumber \\
       &= -\alpha\bigg(h_r(\bm{x},\paramEst,t) - \frac{1}{2}\paramErrorMax^{\top}\Gamma^{-1}\paramErrorMax \bigg) -\frac{\epsilon\delta^2}{4} \nonumber \\
       &\geq -\alpha\bigg(h_r(\bm{x},\paramEst,t) - \frac{1}{2}\paramError^{\top}\Gamma^{-1}\paramError \bigg) -\frac{\epsilon\delta^2}{4} \nonumber 
\end{align}
which implies $ \dot{h} \geq -\alpha(h(\bm{x},\paramEst,t)) -\frac{\epsilon\delta^2}{4}$; thus any locally Lipschitz continuous controller $\bm{u}$ satisfying \eqref{prof:safe_cond} renders the system for \eqref{pf:sys_real_input_d} ISSf with respect to $\mathcal{S}^r_{\param_\delta}$ according to \eqref{pre:issfcbf_condition_1}.
\end{proof}
\begin{remark}
Note that $\epsilon > 0$ is a tunable parameter to confine $h_{\delta}$ to the smallest possible region. 
\end{remark}

Lastly, for the implementation of the proposed method, we formulate Quadratic Programming (QP) to provide the safe control input while satisfying the condition \eqref{pro:safe_constr} that is robust against the input disturbance and parametric model uncertainty as:
 \begin{align}
    &\bm{u}_{\mathsf{safe}} = \quad \argmin_{\bm{u}\in\mathcal{U}} \frac{1}{2} {\vert\vert \bm{u} - \bm{u}_{\mathsf{nominal}} \vert\vert}^2 \quad  \textbf{(RaTVCBF-QP)}\nonumber \\
    &\text{s.t.  } L_fh_r + L_Fh_r\lambda(\bm{x},\paramEst,t)+L_gh_r\bm{u} \geq \nonumber \\ 
    &-\alpha\bigg(h_r(\bm{x},\paramEst,t) - \frac{1}{2}\paramErrorMax^{\top}\Gamma^{-1}\paramErrorMax+\gamma(\delta)\bigg) +\eta - \frac{\partial h_r}{\partial t}.\label{RaISSF-qp}
\end{align}
\section{Application to Robotic Surface Treatment}\label{sec:application}
In this section, it is shown that the proposed method ensures that MRR \eqref{Preston_equation} remains within an acceptable range across the entire surface. According to \cite{Li}, controlling MRR within the desired range ultimately achieves the desired quality guarantees. We verify the proposed method with numerical simulation on MATLAB and conduct experimental validation in a real robotic setup.
\subsection{Robotic Surface Treatment}
To ensure quality guarantees of the surface treatment via MRR as we mentioned in Section~\ref{sec:problem_formulation}, we enforce contact force constraints in \eqref{RaISSF-qp}. First, we define the parametric contact force, $\bm{f}_c \in \mathbb{R}^n$ from the Kelvin-Voigt model \cite{kelvinModel} as: 
\begin{equation}
\bm{f}_c=-diag(\bm{k})\bm{p}-diag(\bm{b})\dot{\bm{p}},
\label{eqn:kevin_voigt}
\end{equation}
where $diag(\cdot)$ is a diagonal matrix with the elements of a vector, and $\bm{p} \in \mathbb{R}^n_{\geq 0}$ is the penetration of the surface, and $m_o \in \mathbb{R}_{> 0}$, $\bm{k}\in \mathbb{R}^{n}_{\geq 0}$, and $\bm{b} \in \mathbb{R}^{n}_{\geq 0}$ are mass, stiffness and damping parameters, respectively. From \eqref{pf:sys_real_input_d}, the system model including unknown contact dynamics parameters of the model \eqref{eqn:kevin_voigt} is formulated as:
\begin{equation}  
\dot{\bm{x}} = \underbrace{\begin{bmatrix}
\dot{\bm{p}} \\
\bm{0}
\end{bmatrix}}_{f(\bm{x})} -\frac{1}{m_o}\underbrace{\begin{bmatrix}\bm{0}&\bm{0}\\diag(\bm{p}) & diag(\dot{\bm{p}})\end{bmatrix}}_{F(\bm{x})}\underbrace{\begin{bmatrix}
\bm{k} \\
\bm{b}
\end{bmatrix}}_{\theta^*}+\underbrace{\begin{bmatrix}
\bm{0}\\
\frac{1}{m_o}\bm{I}_{m}
\end{bmatrix}}_{g(\bm{x})}\bm{u}, \label{contact_system_model}
\end{equation}
where $\bm{x} = \begin{bmatrix} \bm{p} & \dot{\bm{p}}\end{bmatrix}^{\top}$ is the system state, and $\bm{u}\in \mathbb{R}^{m}$ is the direct force input generated by the force controller from the robot. $\bm{I}_m \in \mathbb{R}^{m\times m}$ denotes the $m\times m$ dimensional identity matrix where in this case, $n=m$. Note that we assume that there exists a stable force tracking controller in the robot (e.g. admittance or hybrid force/position controller). However, since the force controller might be inaccurate and have input disturbances but bounded, we define the whole force input to the system as $\bm{u} = \bm{u}_c + \bm{d}(t)$, where $\bm{u}_c \in  \mathbb{R}^m$ and $\bm{d}(t) \in \mathbb{R}^m$ are the control input from the force controller and input disturbance, respectively. 

Secondly, to ensure the desired contact pressure over the entire surface, we define a candidate control barrier function to regulate the contact force within admissible forces as follows: 
\begin{equation}
    h(\bm{x},t) =(\bm{f}_{c}(\bm{x}) - \bm{f}_{\mathsf{lower}}(t))(-\bm{f}_{c}(\bm{x}) + \bm{f}_{\mathsf{upper}}(t)) \geq 0, \label{eqn:safety_cons_min} 
\end{equation}
where $\bm{f}_{\mathsf{lower}}, \bm{f}_{\mathsf{upper}}$ are admissible time-varying lower and upper contact force bounds that can be determined by the desired MRR as shown in the upper right of Fig.~\ref{fig:scenario}, which depends on the experiment scenario setup.

\begin{figure}[t]
    \centering
    \includegraphics[width=1\linewidth]{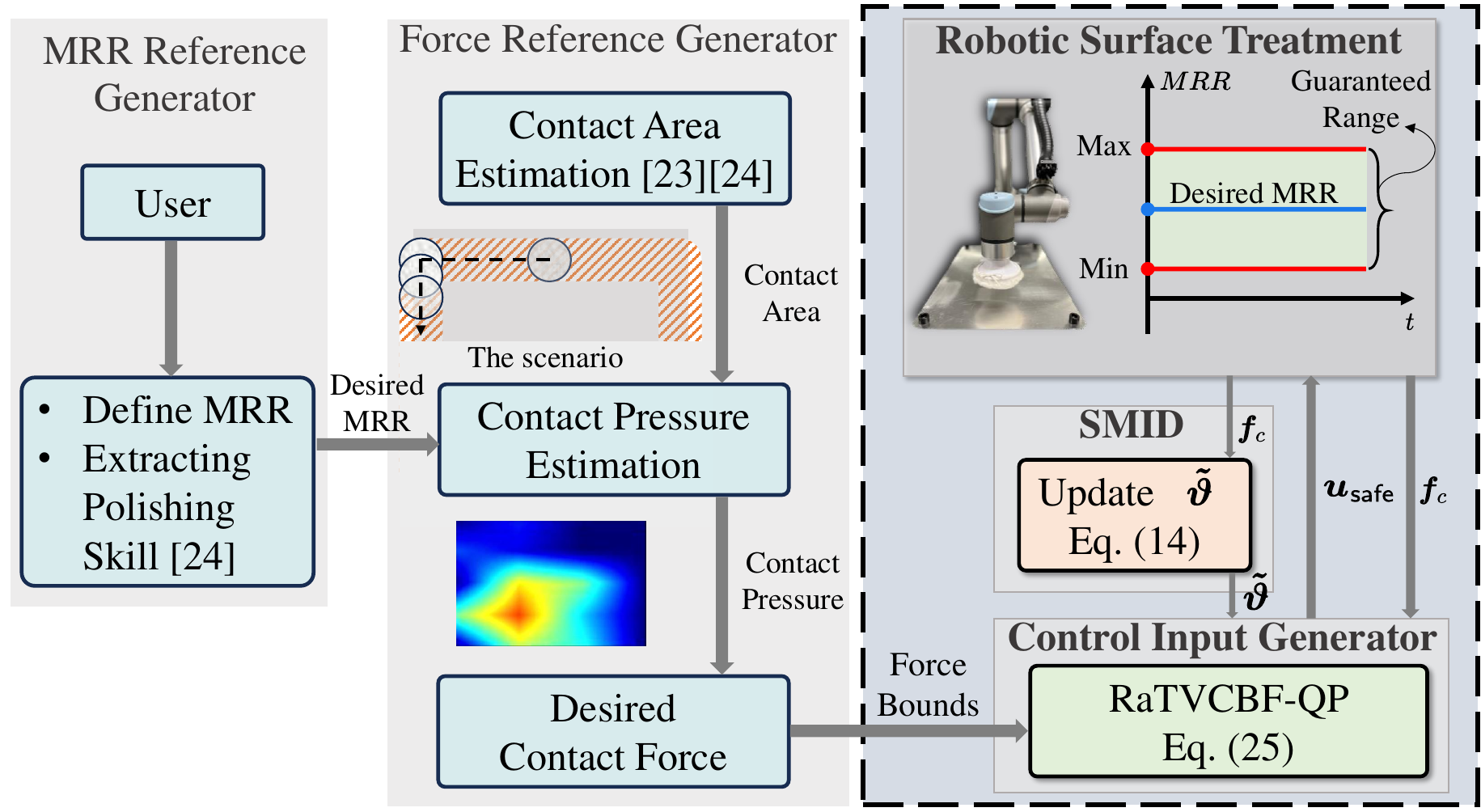}
    \caption{\small{Pipeline of the robotic surface treatment. First, a user defines the desired MRR. Subsequently, in the force reference generator, we estimate contact area between a tool and a given surface and then calculate the desired contact pressure based on the desired MRR. Afterwards, the desired force is obtained from the desired contact pressure by using \eqref{Preston_equation}. Finally, we can determine force bounds (e.g. $\pm15\%$ of the desired force) that are used as constraints to ensure the desired MRR in the dotted box.}}
    \vspace{-0.5cm}
    \label{fig:scenario}
\end{figure}
\begin{figure*}[t] 
\centering
\begin{subfigure}{0.32\textwidth}
    \centering
    \includegraphics[width=\textwidth]{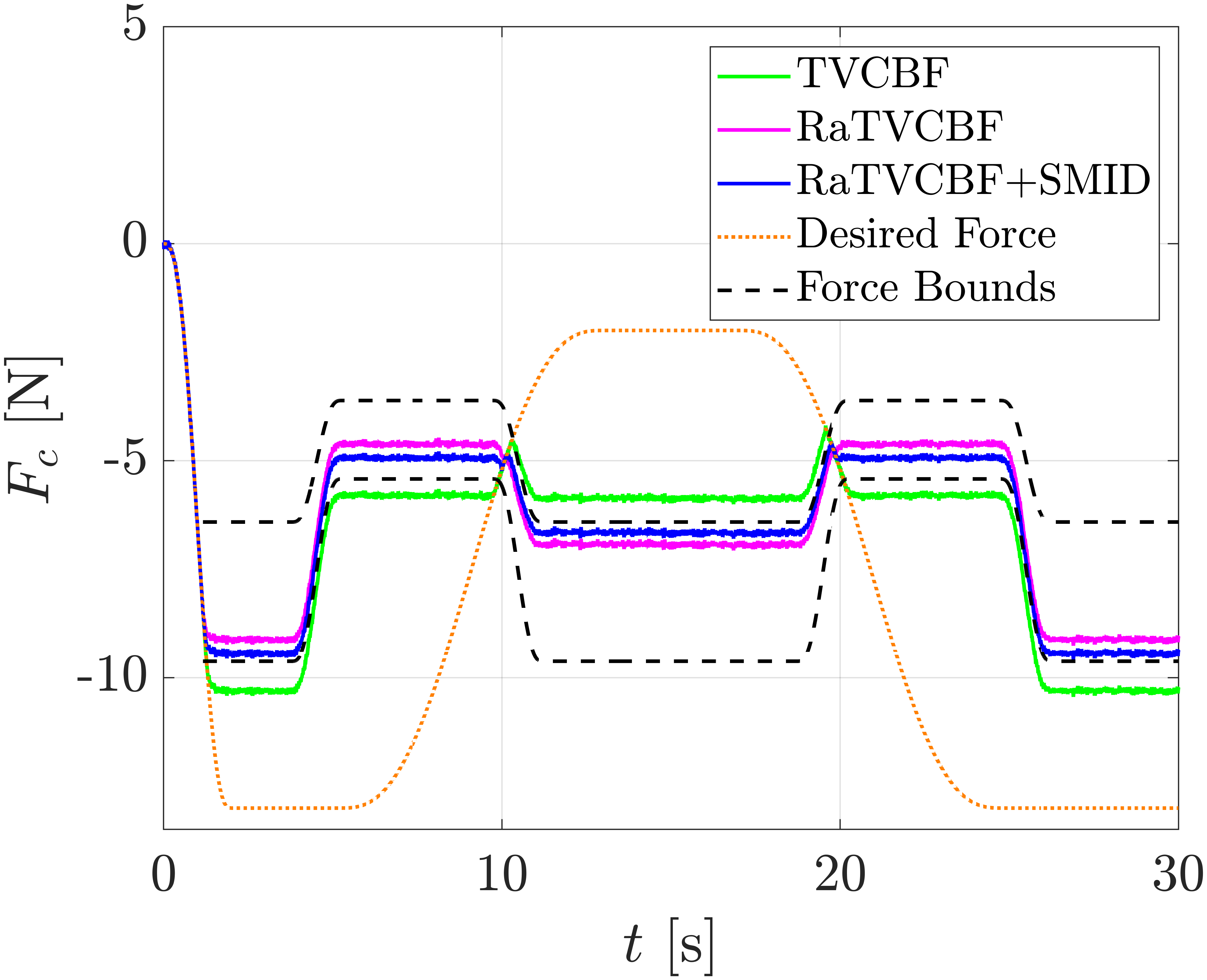}
    \caption{}
    \label{fig:force_bound}
\end{subfigure}
\hspace{0.5mm}
\begin{subfigure}{0.32\textwidth}
    \centering
    \includegraphics[width=\textwidth]{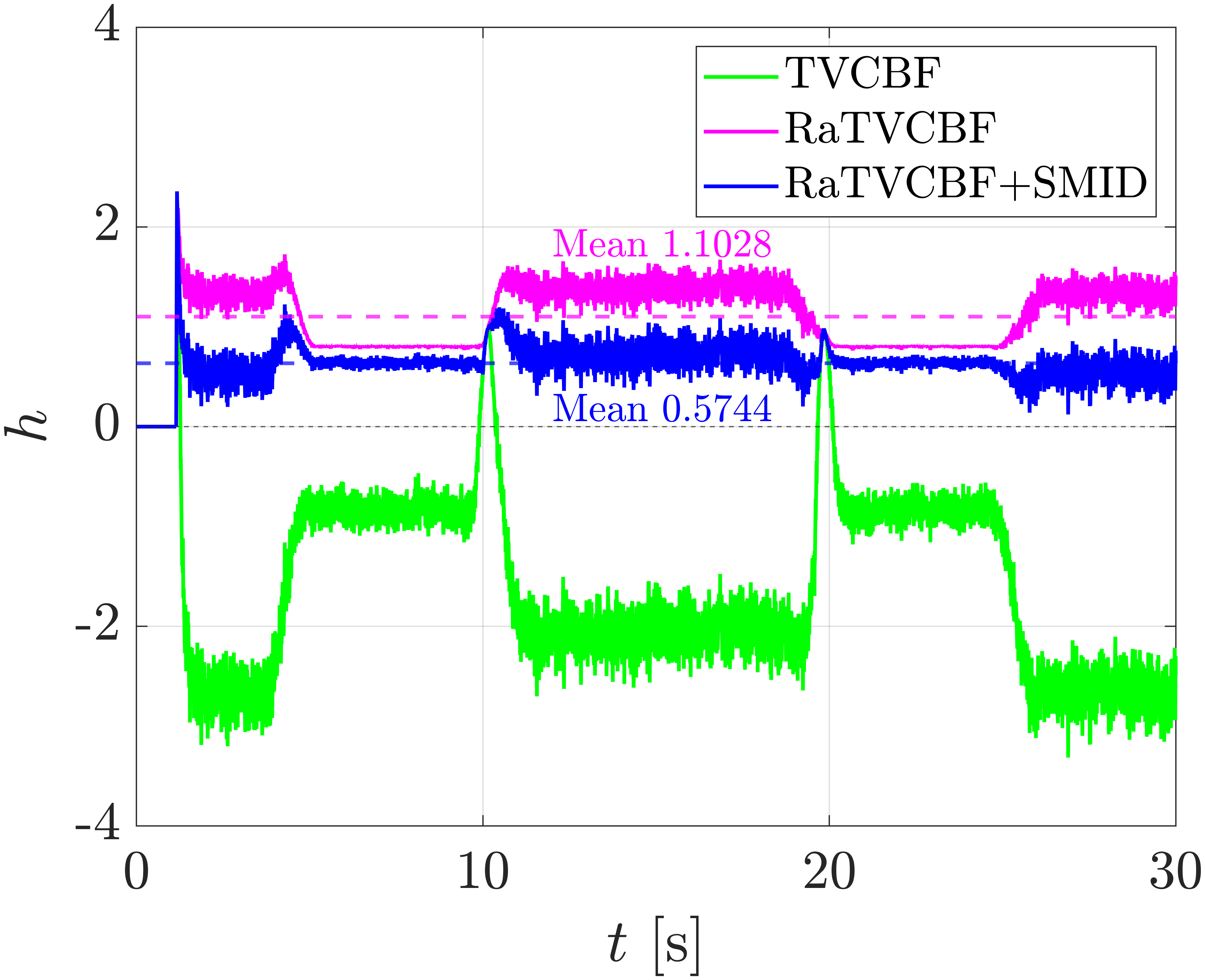}
    \caption{}
    \label{fig:h}
\end{subfigure}
\hspace{0.5mm}
\begin{subfigure}{0.32\textwidth}
    \centering
    \includegraphics[width=\textwidth]{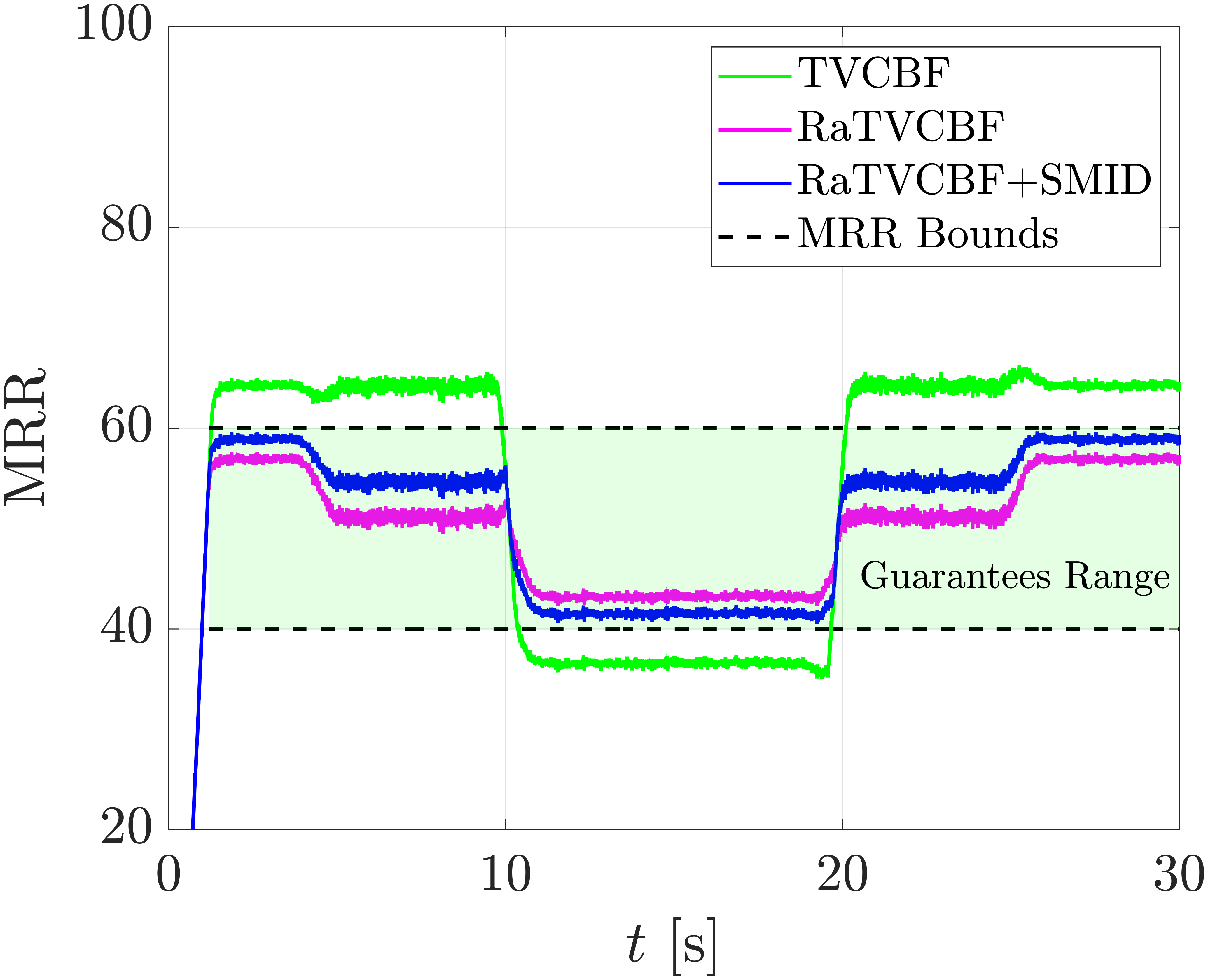}
    \caption{}
    \label{fig:mrr_bound}
\end{subfigure}

\caption{\small{The simulation results of quality guarantees with noise in the robotic polishing application. The figures show the performances of each CBFs-based controller in the presence of parametric model uncertainty and input disturbances in the system \eqref{contact_system_model}. (a) shows the current contact forces provided by each controller, including the admissible contact force bounds in dotted lines, and (b) represents $h$ values to check the constraint conditions; $h\geq 0$ satisfies the force bound constraints. (c) finally presents the performances of quality guarantees on each controller in terms of MRR. Note that we initiate the CBFs-based controllers after the contact forces reach the bound set as shown in (a).}}
\label{sim:graph}
\end{figure*}

\begin{figure*}[t] 
\centering
\begin{subfigure}{0.32\textwidth}
    \centering
    \includegraphics[width=\textwidth]{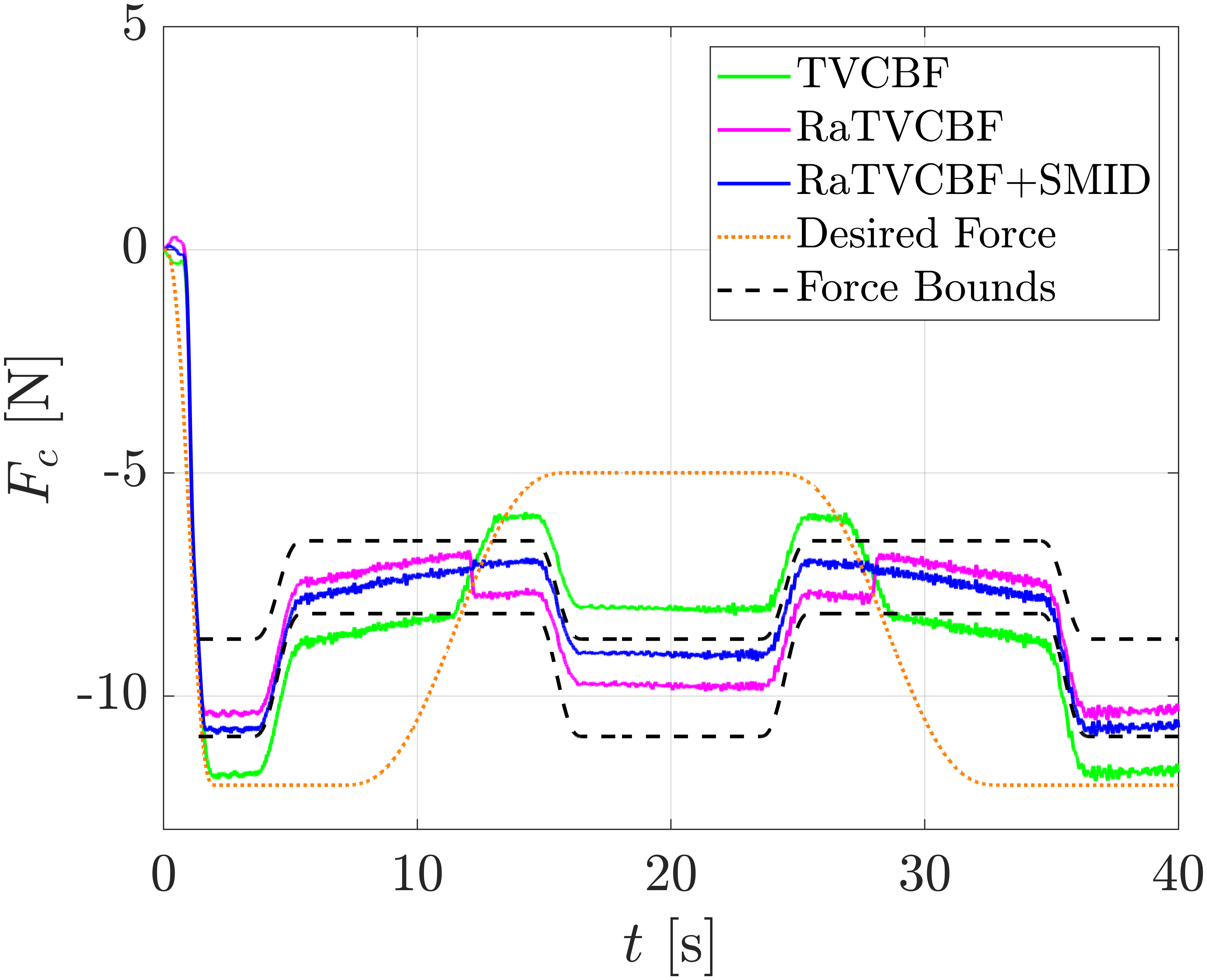}
    \caption{}
    \label{fig:force_bound_real}
\end{subfigure}
\hspace{0.5mm}
\begin{subfigure}{0.32\textwidth}
    \centering
    \includegraphics[width=\textwidth]{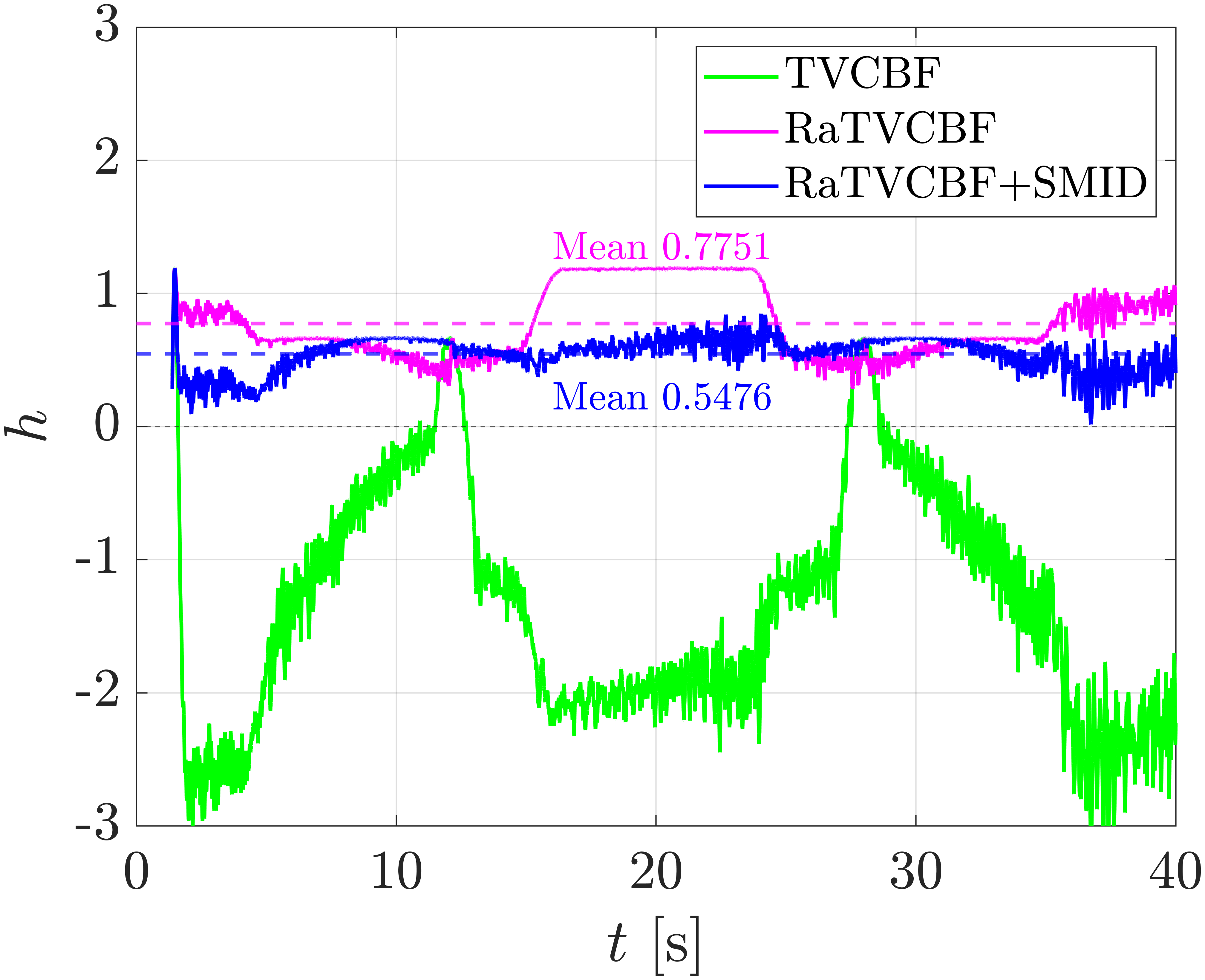}
    \caption{}
    \label{fig:h_real}
\end{subfigure}
\hspace{0.5mm}
\begin{subfigure}{0.32\textwidth}
    \centering
    \includegraphics[width=\textwidth]{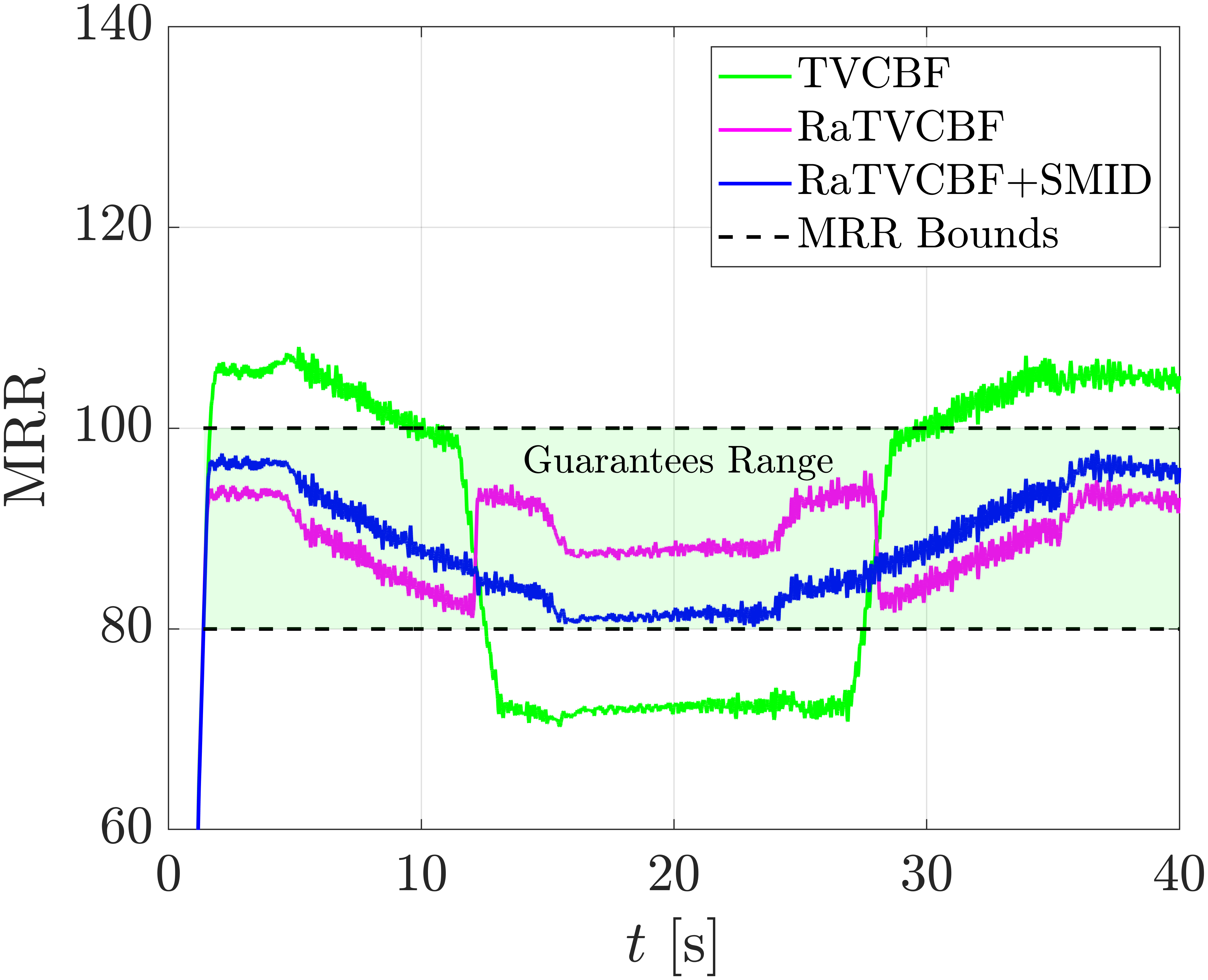}
    \caption{}
    \label{fig:mrr_bound_real}
\end{subfigure}
\caption{\small{Results of experimental validation in real robotic setup. (a) shows contact force trajectories of each method in the z-direction, and the desired force for the nominal controller is set to being outside of the force bounds on purpose to check if violations occur or not. (b) presents $h$ functions of each controller, and $h\geq 0$ indicates the current force belongs in the bound set. (c) shows the current MRR of each method with guarantees range in pale green. Note that each controller initiates when the contact happens and the current contact forces are in the desired constraint set.}}
\vspace{-0.3cm}
\label{real:graph_real}
\end{figure*}

\subsection{Scenario Setup for Simulation and Experiment}
In this work, we focus on the quality guarantees in the dotted box as shown in Fig.~\ref{fig:scenario}, rather than how to obtain MRR from the user and estimate the contact area. It is worth noting that obtaining MRR and contact area estimation can be done with analytical and numerical approaches \cite{XIAO2020188}\cite{YkIROS2022}. We arbitrarily select the desired MRR and design a simple but reliable plane-based scenario as shown in the middle of Fig.~\ref{fig:scenario}, so that the contact area can be easily calculated with the closed solution between the circle polishing tool and the plane. This allows us to show a clear comparison across CBFs-based controllers in the simulation and real robotic experiment. We first let a robot polish on the rectangular aluminium plane and pass edge areas and return to the start point after moving around the plane as shown in the middle of Fig.~\ref{fig:scenario}. On the edge regions, the contact areas are changed, which means that the current contact forces should be also adjusted to let the current MRR remain in \textit{the lower and upper MRR bounds} (e.g. at $\pm10$ of the desired MRR here). To this end, we calculate $\bm{f}_{\mathsf{lower}}$ and $ \bm{f}_{\mathsf{upper}}$ from the lower/upper MRR bounds by using \eqref{Preston_equation}, which is shown in Fig.~\ref{fig:force_bound}. In this scenario, we use a P controller as the nominal controller in the simulation. Note that we intentionally define the desired contact force profile for the nominal controller outside of the force bound, \eqref{eqn:safety_cons_min} to clearly show whether each controller ensures the bounds in the presence of uncertainties.

\subsection{Simulation Results}
Fig.~\ref{sim:graph} shows the simulation results of controllers in the robotic surface treatment scenario. We first carry out Time-Varying CBF (TVCBF) \cite{TvCBF2019} as a baseline in the simulation. In this simulation, we set the ground truth of $\bm{k},\bm{b}$ to $1400$ and $70$, respectively, on the other hand, the system model has the putative parameters, $\hat{\bm{k}},\hat{\bm{b}}$ that are $1000$ and $10$, respectively. Note that the proposed method does not rely on the initial estimate conditions such as over or under estimate of the parameters, as long as the initial conditions satisfy the maximum possible error condition. To show a clear comparison between controllers, we use the same $\mathcal{K}_{\infty}$ functions as $K(\cdot)$ and $\alpha(\cdot)$ in \eqref{pre:safety_condition} and \eqref{RaISSF-qp}, respectively. As shown in Fig.~\ref{fig:force_bound} and Fig.~\ref{fig:h}, TVCBF violates the constraint of force bound, \eqref{eqn:safety_cons_min} due to the uncertainties of model and input disturbances. On the other hand, the proposed method, RaTVCBF complies with the constraint, $h\geq 0$ since we impose the robustness against the parametric model uncertainty and input disturbances by using the maximum possible error, $\paramErrorMax$ and the bound of the disturbances, $\delta$, respectively. Since the method ensures that the contact forces stay within the force bounds, we can expect that it also enables the current MRR to remain within the desired lower and upper bounds of the desired MRR while the baseline violates them as shown in Fig.~\ref{fig:mrr_bound}. Fig.~\ref{fig:h} and \ref{fig:mrr_bound} show the performances of the proposed method with and without SMID. It is observed that applying SMID to RaTVCBF reduces the conservatism by approximately $48\%$ based on each mean value. We use the batch size of $5$ and $\varepsilon = 0.0008$ in this simulation.
\subsection{Experimental Validation}
To validate the proposed method, we carry out the robotic surface treatment on a flat aluminium plate by using a Universal Robot UR10e manipulator with a built-in F/T sensor and attach the polishing tool to the end-effector as shown in Fig.~\ref{fig:scenario}. The real experimental scenario is the same as the numerical simulation to show clearly that the experimental results are aligned with the simulation. In this experiment, we use a hybrid force/position controller as the nominal controller to track the desired path in x-y plane and contact force in the z-direction. 

We first generate the robot trajectory with constant linear velocity of the tool and calculate the contact area based on the area of circular segment in edge regions. Fig.~\ref{real:graph_real} shows the performances of each controller. The proposed method can regulate contact forces to stay within the force bounds compared to TVCBF as shown in Fig.~\ref{fig:force_bound_real}. The values of $h$ functions as shown in Fig.~\ref{fig:h_real} indicate clearly whether each method violates the force bounds. Since the proposed method allows the contact forces to maintain within \eqref{eqn:safety_cons_min}, the proposed method ensures the guaranteed range of MRR while the baseline violates the range as shown in Fig.~\ref{fig:mrr_bound_real}. This is because we enforce robustness against uncertainties by using the maximum possible error and the bound of input disturbances. Moreover, we use SMID to reduce the conservative behaviors of RaTVCBF based on the collected input-output data as shown in Fig.~\ref{fig:h_real} and \ref{fig:mrr_bound_real}. It is observed that leveraging SMID alleviates the conservatism by about $30\%$ when comparing each mean value. We have the batch size of $5$ and $\varepsilon = 0.35$ in this experiment. For practical implementation, $\varepsilon$ is a tunable parameter determined by the magnitude of disturbances, measurement noise, and unmodeled dynamics \cite{smidbook}. In addition, $k_p$ from \eqref{Preston_equation} is an unknown parameter, but it can be encapsulated in the parametric structure of $\theta$, (e.g. $\theta$ =$\begin{bmatrix}
        k_pk & k_pb
    \end{bmatrix}^{\top}$), so this uncertainty can also be treated in the proposed method. In the presence of tool wear, the Preston coefficient $k_p$ will be time-varying and may be estimated  based on the ratio between initial and final MRR \cite{Quinsat}. 

\section{Conclusion}\label{sec:conclusions}
This paper proposed a robust adaptive control design in the presence of parametric uncertainty and input disturbances, accompanying time-varying constraints. The design specifically leveraged RaCBFs and ISSf to make a controller robust against parametric uncertainty and disturbances, including time-varying control barrier functions. Furthermore, we incorporated SMID to decrease the conservatism of robust approaches. To evaluate the performance of the proposed method, we demonstrated it in the robotic surface treatment application where ensuring treatment quality is crucial. To ensure quality guarantees, we first defined MRR as the standard of quality based on Preston's equation and  then used the proposed method to keep the current MRR within the desired MRR range. The results of simulation and real experiment with clear comparison to an existing work showed that the proposed method outperforms the baseline in that it not only ensures quality guarantees but also achieves reduced conservatism despite uncertainties in the real robotic system. 
\section*{Acknowledgement}
This work was supported by MADE FAST, funded by
Innovation Fund Denmark, grant number 9090-00002B.


\bibliographystyle{IEEEtran}
\bibliography{bibliography}

\end{document}